\documentclass[lettersize,journal]{IEEEtran}
\usepackage{amsmath,amsfonts}
\usepackage{array}
\usepackage[caption=false,font=normalsize,labelfont=sf,textfont=sf]{subfig}
\usepackage{textcomp}
\usepackage{stfloats}
\usepackage{url}
\usepackage{verbatim}
\usepackage{graphicx}
\usepackage{cite}

\usepackage{booktabs}
\usepackage{lipsum}
\usepackage{multirow}
\usepackage{rotating}
\usepackage{adjustbox}
\usepackage{tikz}
\usetikzlibrary{matrix,calc}

\usepackage{amsmath}
\usepackage[ruled, vlined, linesnumbered]{algorithm2e}

\usepackage{amsthm}
\theoremstyle{definition}
\newtheorem{definition}{Definition}

\newtheorem{theorem}{Theorem}

\hyphenation{op-tical net-works semi-conduc-tor IEEE-Xplore}

\begin{document}

\title{Prasatul Matrix: A Direct Comparison Approach for Analyzing Evolutionary Optimization Algorithms}

\author{Anupam~Biswas,~\IEEEmembership{Member,~IEEE}
\thanks{A. Biswas is with the Department of Computer Science and Engineering, National Institute of Technology Silchar, Cachar, Assam, 788010, India.\\
E-mail: anupam@cse.nits.ac.in}
\thanks{Manuscript received Month xx, 20yy; revised Month xx, 20yy.}}

\markboth{JOURNAL OF LATEX CLASS FILES, VOL. XX, NO. XX, MONTH YYYY}%
{Biswas : A Direct Comparison Approach for Analyzing Evolutionary Optimization Algorithms}


\maketitle

\begin{abstract}
The performance of individual evolutionary optimization algorithms are mostly measured in terms of statistics such as mean, median and standard deviation etc., computed over the best solutions obtained with few trails of the algorithm. To compare the performance of two algorithms, the values of these statistics are compared instead of comparing the solutions directly. This kind of comparison lacks direct comparison of solutions obtained with different algorithms. For instance, the comparison of best solutions (or worst solution) of two algorithms simply not possible. Moreover, ranking of algorithms is mostly done in terms of solution quality only, despite the fact that the convergence of algorithm is also an important factor. In this paper, a direct comparison approach is proposed to analyze the performance of evolutionary optimization algorithms. A direct comparison matrix called \emph{Prasatul Matrix} is prepared, which accounts direct comparison outcome of best solutions obtained with two algorithms for a specific number of trials. Five different performance measures are designed based on the prasatul matrix to evaluate the performance of algorithms in terms of Optimality and Comparability of solutions. These scores are utilized to develop a score-driven approach for comparing performance of multiple algorithms as well as for ranking both in the grounds of solution quality and convergence analysis. Proposed approach is analyzed with six evolutionary optimization algorithms on 25 benchmark functions. A non-parametric statistical analysis namely Wilcoxon paired sum-rank test is also performed to verify the outcomes of proposed direct comparison approach.

\end{abstract}

\begin{IEEEkeywords}
Evolutionary computation, Performance measure, Prasatul matrix, Convergence, Optimization.
\end{IEEEkeywords}

\section{Introduction}
\IEEEPARstart{E}{volutionary} Optimization Algorithms (EOAs) as well as their applications have grown manifold in last decades. Thus, it is necessary to develop effective evaluation methodology that helps in better understanding the performance of the algorithm. EOAs are evaluated primarily from the perspective of solution quality and convergence. Solution quality is measured on the basis of solutions obtained in different trials of the algorithm. On the other hand, empirical convergence analysis is mostly done on the basis of solutions obtained in different iterations for a single trial of the algorithm. Majority of evaluation techniques focus on solution quality of EOAs, which include non-parametric approaches~\cite{derrac2011practical}, parametric approaches~\cite{czarn2004statistical,liu2022cooperative}, statistical tests~\cite{carrasco2020recent}, Bootstrapping~\cite{carrano2008enhanced,nijssen2003analysis}, Drift analysis~\cite{he2001drift}, Exploratory Landscape Analysis (ELA)~\cite{mersmann2010benchmarking}, theoretic analysis~\cite{lockett2013measure,Xiaoliang2021} and visual analysis approaches~\cite{lutton2011visual,biswas2014visual,biswas2017analyzing,biswas2017regression,Lin2022} etc. Whereas, convergence analysis in general is done through visual inspection of graphical presentation of solutions obtained in different iterations sequentially~\cite{mirjalili2015ant,mirjalili2016whale,mirjalili2016sca,biswas2018particle,biswas2016atom,sarkar2022comparative,Wang2022}.

Although many approaches have been developed, parametric and non-parametric approaches are widely used for analyzing solution quality of EOAs. Most of these approaches incorporate results obtained on benchmark functions or some specific problems. Basic statistical measures such as standard deviation, mean, median, maximum, minimum are estimated for comparing performance of EOAs. However, comparison of statistics obtained for two algorithms is an indirect approach as the solutions obtained with two algorithms are not compared directly. Most importantly, same values of statistics do not imply that the solutions of two algorithms are exactly the same. For instance, two algorithms may have the same mean values, but this certainly does not imply performance of both the algorithms are same as well. This is the reason why along with mean, standard deviation and other statistics are often considered and interpreted together. However, observing multiple statistics to draw conclusion poses added difficulty in the performance comparison process of EOAs.

In this paper,  a direct comparison approach is proposed to analyze performance of EOAs, where solutions obtained with two algorithms are compared directly. A direct comparison matrix called \emph{Prasatul Matrix} is prepared based on the optimality and comparability level of the solutions obtained with two different algorithms. The prasatul matrix is used to design five different performance measures as well as ranking schemes for EOAs. The key features of the proposed approach are as follows: 
\begin{itemize}
    \item Unlike comparing statistics such as mean and standard deviation, the solutions obtained with two algorithms are directly compared and recorded in the prasatul matrix.
    \item The five measures (Section~\ref{sec2.4}) which are designed based on prasatul matrix are equally capable of comparing EOAs both in the grounds of solution quality and convergence.
    \item Unlike comparing scores obtained with two algorithms, here the score itself gives the comparative outcome of solutions obtained with two EOAs (Section~\ref{sec3.1}). Interpretation of proposed measures are done in pairs of two EOAs.
    \item Score-driven ranking schemes (Section~\ref{sec3.2}) designed based on prasatul matrix are also capable of ranking EOAs both in the grounds of solution quality and convergence.
    \item To best of our knowledge, for the very first time this work introduces score-driven comparison as well as ranking for analyzing convergence of EOAs.
\end{itemize}

Rest of the paper is organized as follows. Section~II describes the proposed direct comparison approach covering preliminary definitions, the prasatul matrix, different measures designed and the algorithm for generating prasatul matrix for a pair of two EOAs. Section~III details about how the newly designed measures can be utilized for direct comparison of multiple algorithms and ranking both in terms of quality and convergence. Section~IV details about the experimental setup. Section~V presents the results on several benchmark functions. Finally, concluded in Section~VI.

\hfill

\section{Direct Comparison Approach}
\subsection{Preliminary Definitions}
The direct comparison approach involves two EOAs in the process. The role of the participating optimization algorithms in the direct comparison approach are defined as follows:
\begin{definition}[Primary Algorithm ($A_p$)]
The algorithm whose performance is to be evaluated in comparison to other algorithm is referred as primary algorithm.
\end{definition}

\begin{definition}[Alternative Algorithm ($A_q$)]
The algorithms with whom the primary algorithm is to be compared is referred as alternative algorithm.
\end{definition}

This work considers only single objective minimization problems. Therefore, all definitions are given in the context of minimization problems. Let us consider $n$ best solutions obtained with each of the two algorithms $A_p$ and $A_q$  for $n$ trials on the same minimization problem (i.e. objective function) and arranged in a non-decreasing ordered list as per the quality of the solutions as $P=[p_1, p_2, p_3, ..., p_n]$ and $Q=[q_1, q_2, q_3, ..., q_n]$ respectively. Let $U=[u_1, u_2, u_3, ..., u_{2n}]$ be the universe comprising all the best solutions in $P$ and $Q$ obtained with algorithm $A_p$  and $A_q$ respectively, which is also a non-decreasing ordered list as per the quality of the solutions.  Therefore, $u_1$ and $u_{2n}$ are termed as the \textit{Universe Best ($U_b$)} and  \textit{Universe Worst ($U_w$)} respectively. 




\begin{definition}[Universe Mean ($U_\sigma$)]
 Mean or average of all best solutions in the Universe is defined as:
\begin{equation}
    U_\sigma = \frac{\sum_{\forall u_i \in U} u_i}{|U|}
\end{equation}
\end{definition}




\begin{definition}[Optima ($O$)]
It is the optimal value of the finite search space or solution space of the objective function $f(x)$ considered for both algorithms $A_p$ and $A_q$. The optima for a minimization problem having finite search space is defined as:

\begin{equation}
  O = \arg \min_{x} f(x).
\end{equation}
\end{definition}

\subsection{Comparability and Optimality Levels}

The comparability level of a solution is determined by simply comparing with another solution. The three levels of comparability are defined as follows:

\begin{definition}[Comparability Level 1]
 If $i^{th}$ solution $p_i$ obtained with algorithm $A_p$ is better than the solution $q_i$ of algorithm $A_q$ then $p_i$ is termed as in Comparability Level 1. Thus, for all the solutions in $P$ in comparison to the solutions in $Q$, the Comparability Level 1 set ($L_1^c$) is defined as follows: 
\begin{equation}
    L_1^c=\{p_i | p_i<q_i, \forall (p_i,q_i)\}
\end{equation}
\end{definition}

\begin{definition}[Comparability Level 2]
 If $i^{th}$ solution $p_i$ obtained with algorithm $A_p$ is equal to the solution $q_i$ of algorithm $A_q$ then $p_i$ is termed as in Comparability Level 2. Thus, for all the solutions in $P$ in comparison to the solutions in $Q$,  the Comparability Level 2 set ($L_2^c$) is defined as follows: 
\begin{equation}
    L_2^c=\{p_i | p_i=q_i, \forall (p_i,q_i)\}
\end{equation}
\end{definition}

\begin{definition}[Comparability Level 3]
 If $i^{th}$ solution $p_i$ obtained with algorithm $A_p$ is greater than the solution $q_i$ of algorithm $A_q$ then $p_i$ is termed as in Comparability Level 3. Thus, for all the solutions in $P$ in comparison to the solutions in $Q$, the Comparability Level 3 set ($L_3^c$) is defined as follows: 
\begin{equation}
    L_3^c=\{p_i | p_i>q_i, \forall (p_i,q_i)\}
\end{equation}
\end{definition}

\begin{figure}
    \centering
    \includegraphics[width=\columnwidth]{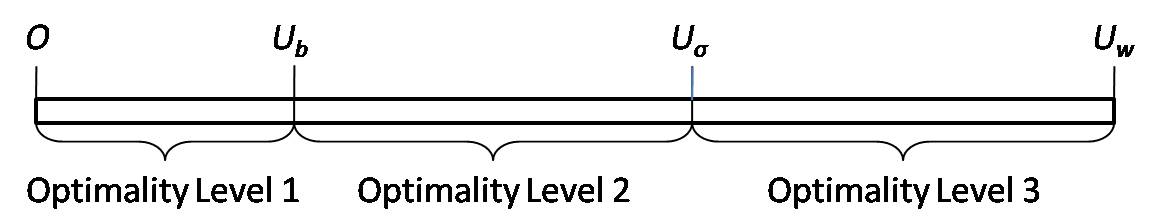}
    \caption{Pictorial depiction of Optimality Levels based on optima and universe}
    \label{fig:OpLevel}
\end{figure}

The optimality level of a solution is defined in terms of its distance from the actual optima of the given problem (minimization problem) as shown in the Fig.~\ref{fig:OpLevel}. The three levels of optimality are defined as follows:
\begin{definition}[Optimality Level 1]
 If $i^{th}$ solution $p_i$ obtained with algorithm $A_p$ lies in between the optima ($O$) and universe best ($U_b$) including $O$ and $U_b$ then $p_i$ is termed as in Optimality Level 1. Thus, for all the solutions in $P$, the Optimality Level 1 set ($L_1^o$) is defined as follows:
\begin{equation}
    L_1^o=\{p_i|p_i\in[O, U_b]\}
\end{equation}
\end{definition}

\begin{definition}[Optimality Level 2]
 If $i^{th}$ solution $p_i$ obtained with algorithm $A_p$ lies in between the universe best ($U_b$) and universe mean ($U_\sigma$) including $U_\sigma$ then $p_i$ is termed as in Optimality Level 2. Thus, for all the solutions in $P$, the Optimality Level 2 set ($L_2^o$) is defined as follows:
\begin{equation}
    L_2^o=\{p_i|p_i\in(U_b,U_\sigma]\}
\end{equation}
\end{definition}

\begin{definition}[Optimality Level 3]
If $i^{th}$ solution $p_i$ obtained with algorithm $A_p$ lies in between the universe best ($U_\sigma$) and universe worst ($U_w$) including $U_w$ then $p_i$ is termed as in Optimality Level 3. Thus, for all the solutions in $P$, the Optimality Level 3 set ($L_3^o$) is defined as follows:
\begin{equation}
    L_3^o=\{p_i|p_i\in(U_\sigma,U_w]\}
\end{equation}
\end{definition}

\subsection{Prasatul Matrix}
\label{sec2.3}
The prasatul matrix $\mathcal{L}$ is prepared by directly comparing the best solutions obtained with the primary algorithm $A_p$ and the best solutions obtained with the alternative algorithm $A_q$ for $n$ trials. The prasatul matrix has three levels of abstractions both in rows and columns, which are defined on the basis of comparability and optimality w.r.t. alternatives and optima respectively. Here, alternatives imply any algorithm $A_q$ whose performance is to be compared with the algorithm $A_p$. A typical prasatul matrix with different abstraction level w.r.t. alternatives and optima is presented in Fig.~\ref{fig:compmat}. 

Comparison of solutions of $A_p$ with the alternative $A_q$ is enumerated in terms of Comparability Level 1, 2 and 3, which corresponds to three levels of abstraction: \emph{Win}, \emph{Tie} and \emph{Lose} respectively. These are presented in the rows of the prasatul matrix, which indicates how good or bad the solution is w.r.t. alternatives. The abstraction Win means solution of $A_p$ is better than $A_q$ and corresponding row is count of such wining solutions. The abstraction Tie means the solution of $A_p$ is same as $A_q$ and corresponding row is count of such solutions. The abstraction Lose means the solution of $A_p$ is worse than $A_q$ and corresponding row counts such losing solutions.


On the other hand, comparison of best solutions of $A_p$ with the optima is enumerated in terms of Optimality Level 1, 2 and 3, which corresponds to three levels of abstraction: \emph{Best}, \emph{Average} and \emph{Worst}. These are presented in the columns of the prasatul matrix, which indicates how good or bad the solution is w.r.t. optima. The abstraction \emph{Best} means the solution of $A_p$ is at most $U_b$ away from the optima  and corresponding column is the count of such solutions. The abstraction \emph{Average} means the solution of $A_p$ is at most $U_\sigma$ and more than $U_b$ away from the optima and corresponding column is count of such solutions. The abstraction \emph{Worst} means the solution of $A_p$ is more than $U_\sigma$ away from the optima and corresponding column is count of such solutions.

Considering both comparability and optimality levels, if the abstraction levels Win and Best are taken together then it is the count of solutions that are in both $L_1^c$ and  $L_1^o$, i.e. $|L_1^c\cap L_1^o|$. Similarly, if the abstraction levels Win and Average are taken together than it will be $|L_1^c\cap L_2^o|$ and so on. Therefore, the elements of the prasatul matrix $\mathcal{L}$ are defined as follows:

\begin{equation}
\mathcal{L}_{ij}= |L_i^c\cap L_j^o|, \forall i,j
\end{equation}

\subsection{Measures and Interpretations}
\label{sec2.4}
Interpretation of performance of algorithms are done in terms of Optimality of the solutions and Comparability of solutions. However, the quality of the solutions are determined on the basis of direct comparison and overall 
comparison, which are referred as D-scores and K-scores respectively. For direct comparison, two D-scores are defined: Direct Optimality (DO) and Direct Comparability (DC). For overall comparison, three K-scores are defined: Overall Optimality (KO), Overall Comparability (KC) and Overall Together (KT).

\begin{figure}[t]
    \centering
\resizebox{\columnwidth}{!}{    
    \begin{tabular}{l|l|c|c|c|c}
\multicolumn{2}{c}{}&\multicolumn{3}{c}{Optima}&\\
\cline{3-5}
\multicolumn{2}{c|}{}&$Best$&$Average$&$Worst$&\multicolumn{1}{c}{Total}\\
\cline{2-5}
\multirow{3}{*}{\begin{turn}{90}Alternative\end{turn}}& $Win$ & $a$ & $b$ & $c$ & $a+b+c$\\
\cline{2-5}
& $Tie$ & $d$ & $e$ & $f$ & $d+e+f$\\
\cline{2-5}
& $Lose$ & $g$ & $h$ & $i$ & $g+h+i$\\
\cline{2-5}
\multicolumn{1}{c}{} & \multicolumn{1}{c}{Total} & \multicolumn{1}{c}{$a+d+g$} & \multicolumn{    1}{c}{$b+e+h$} & \multicolumn{    1}{c}{$c+f+i$} & \multicolumn{1}{c}{$n$}\\
\end{tabular}
}
    \caption{Prasatul Matrix ($\mathcal{L}$) with various abstraction levels}
    \label{fig:compmat}
\end{figure}

\subsubsection{D-scores}
Direct comparison of two algorithms are defined in terms of Optimality and Comparability, which are referred as D-scores. As discussed above, the Optimality is measured on three levels of abstractions: \emph{Best}, \emph{Average} and \emph{Worst}. The abstraction Best i.e. Level 1 Optimality is defined as:
\begin{equation}
    O_1=\frac{\mathcal{L}_{11}}{1+\sum_{i=1}^3{\mathcal{L}_{i1}}}
\end{equation}
where, high $O_1$ value is interpreted as best solutions of algorithm $A_p$ are better than best solutions of algorithm $A_q$. Likewise, Level 2 and Level 3 Optimality are defined as:

\begin{equation}
    O_2=\frac{\mathcal{L}_{12}}{1+\sum_{i=1}^3{\mathcal{L}_{i2}}}
\end{equation}
and
\begin{equation}
    O_3=\frac{\mathcal{L}_{13}}{1+\sum_{i=1}^3{\mathcal{L}_{i3}}}
\end{equation}
respectively. Here, high $O_2$ and $O_3$ values are interpreted as
average solutions of algorithm $A_p$ are better than average solutions of algorithm $A_q$ and worst solutions of algorithm $A_p$ are better than worst solutions of algorithm $A_q$ respectively. Thus, Direct Optimality (DO) is measured by combining weighted comparative Optimality of both algorithms in all three levels of abstractions. 
\begin{definition}[Direct Optimality (DO)] 
The Direct Optimality of algorithm $A_p$ in comparison to algorithm $A_q$ is defined as:
\begin{equation}
    DO=O_1+0.5*O_2-O_3
\end{equation}
where, high DO value indicates algorithm $A_p$ is better than algorithm $A_q$ in terms of Optimality.
\end{definition}

Similarly, Direct Comparability is also measured on three levels of abstractions: \emph{Win}, \emph{Tie} and \emph{Lose}. The abstraction Win i.e. Level 1 Comparability is defined as:

\begin{equation}
    C_1=\frac{\mathcal{L}_{11}}{1+\sum_{i=1}^3{\mathcal{L}_{1i}}}
\end{equation}
where, high $C_1$ values indicate the solutions of algorithm $A_p$ that are better than algorithm $A_q$ are the best solutions of $A_p$ in terms of Optimality. Likewise, the abstraction \emph{Tie} and \emph{Lose} i.e. Level 2 and Level 3 Comparability are defined as:
\begin{equation}
    C_2=\frac{\mathcal{L}_{21}}{1+\sum_{i=1}^3{\mathcal{L}_{2i}}}
\end{equation} 
and
\begin{equation}
    C_3=\frac{\mathcal{L}_{31}}{1+\sum_{i=1}^3{\mathcal{L}_{3i}}}
\end{equation}
respectively. Here, high $C_2$ values indicate that solutions of algorithm $A_p$ that are same as algorithm $A_q$ are the best solutions of $A_p$ in terms of Optimality. However, high $C_3$ values indicate that the solutions of algorithm $A_p$ that are worst in comparison to algorithm $A_q$ are the best solutions of $A_p$ in terms of optimality. The Direct Comparability (DC) is measured by combining weighted Comparability of both the algorithms in all three levels of abstractions. 
\begin{definition}[Direct Comparability (DC)] 
The Direct Comparability of algorithm $A_p$ in comparison to algorithm $A_q$ is defined as:
\begin{equation}
    DC=C_1+0.5*C_2-C_3
\end{equation}
where, high DC value indicates algorithm $A_p$ is better than algorithm $A_q$ in terms of Optimality.
\end{definition}

\subsubsection{K-Scores}
Unlike the D-Scores, where Optimality and Comparability are measured in the context of either row or columns to cover only one specific abstraction level, K-Scores consider all comparison outcomes into account. Therefore, K-Scores are considered overall performance both in the context of Optimality and Comparability. However, overall Optimality does not consider accounting comparability outcomes and overall Comparability does not consider accounting optimality outcomes. The overall Optimality in Level 1 i.e. in  abstraction level Best is defined as:  
\begin{equation}
    K_1^o=\frac{\sum_{i=1}^3{\mathcal{L}_{i1}}}{\sum{\mathcal{L}_{ii}}}
\end{equation}
where, high $K_1^o$ value indicates that overall solutions of comparing algorithm are Best in terms of optimality, irrespective of whether it loses or wins. The overall Optimality in Level 2 i.e. in  abstraction level Average is defined as:

\begin{equation}
     K_2^o=\frac{\sum_{i=1}^3{\mathcal{L}_{i2}}}{\sum{\mathcal{L}_{ii}}}
\end{equation}
where, high $K_2^o$ value indicates that overall solutions are Average in terms of optimality, irrespective of whether it loses or wins. The overall Optimality in Level 3 i.e. in  abstraction level Worst is defined as:

\begin{equation}
     K_3^o=\frac{\sum_{i=1}^3{\mathcal{L}_{i3}}}{\sum{\mathcal{L}_{ii}}}
\end{equation}
where, high $K_3^o$ value indicates that overall solutions are Worst in terms of optimality, irrespective of whether it loses or wins. The overall Optimality of comparing algorithm is measured by combining overall Optimality in all three levels of abstractions i.e. Best, Average and Worst.

\begin{definition}[Overall Optimality (KO)] 
The Overall Optimality of algorithm $A_p$ is defined as:
\begin{equation}
    KO=K_1^o+0.5*K_2^o-K_3^o
\end{equation}
\begin{equation}
    KO=\frac{\sum_{i=1}^3{\mathcal{L}_{i1}}+0.5*\sum_{i=1}^3{\mathcal{L}_{i2}}-\sum_{i=1}^3{\mathcal{L}_{i3}}}{n}
\end{equation}
where, high KO value indicates that the solutions of algorithm $A_p$ are highly Optimal.
\end{definition}

Similarly, Overall Comparability is also measured on three levels of abstractions: \emph{Win}, \emph{Tie} and \emph{Lose}. The overall Comparability in Level 1 i.e. in  abstraction level Win is defined as:

\begin{equation}
    K_1^c=\frac{\sum_{i=1}^3{\mathcal{L}_{1i}}}{\sum{\mathcal{L}_{ii}}}
\end{equation}
where, high $K_1^c$ value indicates that overall solutions are better compared to other algorithm,  as comparing algorithm mostly wins irrespective of whether it is Best, Average or Worst. The overall Comparability in Level 2 i.e. in  abstraction level Tie is defined as:

\begin{equation}
     K_2^c=\frac{\sum_{i=1}^3{\mathcal{L}_{2i}}}{\sum{\mathcal{L}_{ii}}}
\end{equation}
where, high $K_2^c$ value indicates that overall solutions are same as other algorithm,  as comparing algorithm mostly ties irrespective of whether it is Best, Average or Worst. The overall Comparability in Level 3 i.e. in  abstraction level Lose is defined as:
\begin{equation}
     K_3^c=\frac{\sum_{i=1}^3{\mathcal{L}_{3i}}}{\sum{\mathcal{L}_{ii}}}
\end{equation}
where, high $K_3^c$ value indicates that overall solutions are worse compared to other algorithm,  as comparing algorithm mostly ties irrespective of whether it is Best, Average or Worst. Now, the Overall Comparability of the comparing algorithm is measured by combining Overall Comparability in all three levels of abstractions i.e. Win, Tie and Lose.

\begin{definition}[Overall Comparability (KC)] 
The Overall Comparability of algorithm $A_p$ in comparison to algorithm $A_q$ is defined as:
\begin{equation}
    KC=K_1^c+0.5*K_2^c-K_3^c
\end{equation}
\begin{equation}
    KC=\frac{\sum_{i=1}^3{\mathcal{L}_{1i}}+0.5*\sum_{i=1}^3{\mathcal{L}_{2i}}-\sum_{i=1}^3{\mathcal{L}_{3i}}}{n}
\end{equation}
where, high KC value indicates that the solutions of algorithm $A_p$ are better than that of algorithm $A_q$.
\end{definition}

The  Level 1 and Level 2 measures are important for both Overall Optimality and Overall Comparability. To assert an algorithm to performs better than other algorithm, it should posses most of the comparison outcomes in abstraction levels Best \& Average as well as in Win \& Tie. Thus, to combine both Overall Optimality and Comparability together, elements of prasatul matrix, where abstraction levels Best \& Average and Win \& Tie are overlapping. 

\begin{definition}[Overall Together (KT)] 
The Overall Together measure of algorithm $A_p$ in comparison to algorithm $A_q$ is defined as:
\begin{equation}
   KT= \frac{a+b+d+e}{n}
\end{equation}
where, high KT value indicates that overall the algorithm $A_p$ are better than that of algorithm $A_q$ be it Optimality or Comparability.
\end{definition}

\begin{theorem}
D-scores and K-scores the ranges (-1, +1.5) and [-1, +1] respectively, but KT has the range [0, 1]. 
\end{theorem}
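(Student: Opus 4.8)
The plan is to treat the prasatul matrix as an arbitrary array of nine nonnegative integers $a,b,c,d,e,f,g,h,i$ with $a+b+c+\dots+i=n$, substitute the closed-form expressions for each score, bound it by elementary inequalities, and then show sharpness of each stated interval by exhibiting a matrix (or a one-parameter family indexed by $n$) that attains or approaches the endpoints.

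For the D-scores, I would first note that every ratio occurring in $DO$ and $DC$ — namely $O_1,O_2,O_3$ and $C_1,C_2,C_3$ — has the shape $\dfrac{x}{1+x+y+z}$ with $x,y,z\ge 0$, so each lies in $[0,1)$; the strict upper bound is exactly where the additive $1$ in the denominator is used. Consequently $DO=O_1+0.5\,O_2-O_3$ obeys $DO<1+0.5\cdot 1-0=1.5$ and $DO>0+0-1=-1$, and the identical estimate applies to $DC=C_1+0.5\,C_2-C_3$; hence both lie in $(-1,+1.5)$. To see that no smaller interval contains every attainable value, I would let $n\to\infty$: taking $a=b=\lfloor n/2\rfloor$ and all other entries $0$ drives $O_1,O_2\to 1$ with $O_3=0$, so $DO\to 1.5$, whereas $c=n$ (rest $0$) gives $O_3\to 1$ with $O_1=O_2=0$, so $DO\to -1$; a row–column swap of these constructions handles $DC$.

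For the K-scores, I would pass to the column totals $s_1=\sum_i\mathcal{L}_{i1}$, $s_2=\sum_i\mathcal{L}_{i2}$, $s_3=\sum_i\mathcal{L}_{i3}$, which are nonnegative with $s_1+s_2+s_3=n$, so that $KO=\dfrac{s_1+0.5\,s_2-s_3}{n}$. From $s_1+0.5\,s_2-s_3\le s_1+s_2\le n$ and $s_1+0.5\,s_2-s_3\ge -s_3\ge -n$ we get $KO\in[-1,+1]$, and here — unlike the D-scores — both endpoints are actually attained, by a matrix whose mass sits entirely in the first column (value $+1$) or entirely in the third column (value $-1$). Replacing column totals by row totals $r_1,r_2,r_3$ gives $KC=\dfrac{r_1+0.5\,r_2-r_3}{n}\in[-1,+1]$ by the same argument. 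Finally $KT=\dfrac{a+b+d+e}{n}$ is a partial sum of four of the nine nonnegative entries divided by their total $n$, so $0\le KT\le 1$, with $KT=0$ when $a=b=d=e=0$ and $KT=1$ when $c=f=g=h=i=0$.

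The one genuinely delicate point is keeping straight why the D-score interval is open while the K-score and $KT$ intervals are closed: the $+1$ in the D-score denominators forces every constituent ratio strictly below $1$, so the endpoints $-1$ and $1.5$ are only approached, asymptotically in $n$, and never reached; I would therefore spell out the limiting constructions explicitly rather than merely assert openness. A minor caveat worth mentioning is that the extremal matrices used for the tightness direction should in principle be checked to be realizable as genuine comparison outcomes of two algorithms; in the spirit of the paper the prasatul matrix is treated as a free nonnegative integer array with row-and-column sum $n$, for which the constructions above suffice.
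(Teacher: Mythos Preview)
Your proposal is correct and follows essentially the same componentwise-bounding approach as the paper's own proof: bound each constituent ratio, then combine, with the open interval for the D-scores coming from the $+1$ in the denominators and the closed interval for the K-scores coming from the exact partition $s_1+s_2+s_3=n$. Your argument is in fact considerably more careful than the paper's---you spell out the sharpness constructions and the open-versus-closed distinction explicitly, whereas the paper's proof only sketches the bounds and does not address attainability at all.
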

\begin{proof}
The proof is quite straight forward. Both D-scores and K-scores except KT has only one negative component, which can have maximum -1 value. However, for D-scores maximum value of the negative component will always be less than -1. Both D-scores and K-score also has two components. In case of D-scores, the positive of both components combined can have less than +1.5 as one of those is taken half. On the other hand, for K-scores, both components can never have maximum value at the same time so combined maximum value have +1. Since KT score does not have any negative components so value will be positive. It can have maximum value 1 when denominator is equal to $n$ and 0 when denominator is 0.
\end{proof}

\subsection{Prasatul Matrix Generation}
Prasatul matrix generation process is quite simple and straight forward. A simple algorithm called \emph{$\mathcal{L}$-Matrix Algorithm} is designed for generating prayatul matrix as shown in the Algorithm~\ref{LmatAlgo}. The algorithm takes two inputs: outcome $P$ of the primary algorithm and outcome $Q$ of the alternative algorithm. The entries of prayatul matrix is computed based on the abstraction levels as specified above and finally the algorithm returns the prasatul matrix $L$. The time complexity of the algorithm is $\mathcal{O}(N)$, where $N$ is the number trails of primary and alternative algorithms.

\begin{algorithm}
\caption{$\mathcal{L}$-Matrix Algorithm}\label{LmatAlgo}
\DontPrintSemicolon
\KwIn{$P, Q$}
\KwOut{ Prasatul Matrix $(\mathcal{L})$}
\SetKwBlock{Begin}{procedure}{end procedure}
\Begin($\text{generateMatrix} {(} P, Q {)}$)
{
 $ U\leftarrow  [P, Q]$ list arranged in non-decreasing order\; 
 $[U_b, U_\sigma, U_w] \leftarrow $ \{Best, Mean and Worst from $U$\}\;
  $C \leftarrow \{c_1, c_2, c_3, ...c_9\}$\;
  $c_i \leftarrow 0, \forall c_i \in C$\;
  \ForAll{$p_i\in P$}
  {
    \uIf{$p_i<=U_b$}
    {
        \uIf{$p_i<q_i$}
        {
            $c_1 \leftarrow c_1 + 1$\;
        }
        \uElseIf{$p_i=q_i$}
        {
            $c_4 \leftarrow c_4 + 1$\;
        }
        \uElseIf{$p_i>q_i$}
        {
            $c_7 \leftarrow c_7 + 1$\;
        }
    }
    \uElseIf{$p_i<=U_\sigma$}
    {
      \uIf{$p_i<q_i$}
        {
            $c_2 \leftarrow c_2 + 1$\;
        }
        \uElseIf{$p_i=q_i$}
        {
            $c_5 \leftarrow c_5 + 1$\;
        }
        \uElseIf{$p_i>q_i$}
        {
            $c_8 \leftarrow c_8 + 1$\;
        }
    }
    \uElseIf{$p_i<=U_w$}
    {
      \uIf{$p_i<q_i$}
        {
            $c_3 \leftarrow c_3 + 1$\;
        }
        \uElseIf{$p_i=q_i$}
        {
            $c_6 \leftarrow c_6 + 1$\;
        }
        \uElseIf{$p_i>q_i$}
        {
            $c_9 \leftarrow c_9 + 1$\;
        }
    }
  
  }\label{endfor}
  \ForAll{$i=[1,2,3],j=[1,2,3]$}
  {
    $k \gets i+j-1$\;
    $\mathcal{L}_{ij} \gets c_{k}, c_k\in C$\;
  }
  \Return{$\mathcal{L}$}
}
\end{algorithm}

\section{Performance Evaluation}
The performance of the EOAs are evaluated as follows: 1) primary algorithm compared with one or multiple alternative algorithms and 2) ranking of algorithms, each of the algorithm will get a chance to act as primary algorithm while others will act as alternative algorithm.
\subsection{Direct Comparison of Multiple Algorithm }
\label{sec3.1}
The prasatul matrix prepared for a pair of two algorithms will inherently give the one-to-one comparison of two algorithms. While D-score and K-scores obtained thereby will provide the performance of primary algorithm in comparison to alternative algorithm. To evaluate performance of a primary algorithm in comparison to multiple alternative algorithms i.e. for one-to-many comparison, multiple pairs of primary and alternatives have to be considered. If a primary algorithm has to be compared with $k$ alternatives, there will be  $k$ $\mathcal{L}$-Matrices for $k$ pairs of primary and alternatives. For each of the $k$ pairs, there will be separate sets of D-scores and K-scores indicating performance of the primary algorithm in comparison to $k$ alternatives. To get the overall performance of primary algorithm in comparison to $k$ alternatives on a specific problem, all D-score and K-score are taken into account simply by averaging as follows:
\begin{equation}
\label{eq:AD}
\begin{aligned}
    ADO=\frac{\sum_{i=1}^k{DO_i}}{k}\\ 
    ADC=\frac{\sum_{i=1}^k{DC_i}}{k}
\end{aligned}
\end{equation}
and
\begin{equation}
\label{eq:AK}
\begin{aligned}
    AKO=\frac{\sum_{i=1}^k{KO_i}}{k}\\
    AKC=\frac{\sum_{i=1}^k{KC_i}}{k}\\ 
    AKT=\frac{\sum_{i=1}^k{KT_i}}{k}
\end{aligned}
\end{equation}
where, equation (\ref{eq:AD}) shows the average DO and DC scores. Equation (\ref{eq:AK}) shows the average KO, KC and KT scores.

\subsection{Ranking with Direct Comparison Approach}
\label{sec3.2}
\subsubsection{Problem-wise Ranking}
Problem-wise ranking of EOAs for $k$ algorithms, $d$ dimensions are done in terms of D-scores and K-scores as follows:

\begin{equation}
\label{eq:PRD}
\begin{aligned}
    PDO=\frac{\sum_{i=1}^k\sum_{j=1}^d{DO_{ij}}}{k\times d}\\
    PDC=\frac{\sum_{i=1}^k\sum_{j=1}^d{DC_{ij}}}{k\times d}
\end{aligned}
\end{equation}
and
\begin{equation}
\label{eq:PRK}
\begin{aligned}
    PKO=\frac{\sum_{i=1}^k\sum_{j=1}^d{KO_{ij}}}{k\times d}\\
    PKC=\frac{\sum_{i=1}^k\sum_{j=1}^d{KC_{ij}}}{k\times d}\\
    PKT=\frac{\sum_{i=1}^k\sum_{j=1}^d{KT_{ij}}}{k\times d}
\end{aligned}
\end{equation}
where, equation (\ref{eq:PRD}) shows the problem-wise average DO and DC scores. Equation (\ref{eq:PRK}) shows the problem-wise average KO, KC and KT scores.

\subsubsection{Overall Ranking}

Overall ranking of EOAs for $k$ algorithms, $d$ dimensions, and $p$ problems are done in terms of D-scores and K-scores as follows:

\begin{equation}
\label{eq:ORD}
\begin{aligned}
    ODO=\frac{\sum_{i=1}^k\sum_{j=1}^d\sum_{l=1}^p{DO_{ijl}}}{k\times d \times p}\\
    ODC=\frac{\sum_{i=1}^k\sum_{j=1}^d\sum_{l=1}^p{DC_{ijl}}}{k\times d \times p}
\end{aligned}
\end{equation}
and
\begin{equation}
\label{eq:ORK}
\begin{aligned}
    OKO=\frac{\sum_{i=1}^k\sum_{j=1}^d\sum_{l=1}^p{KO_{ijl}}}{k\times d \times p}\\
    OKC=\frac{\sum_{i=1}^k\sum_{j=1}^d\sum_{l=1}^p{KC_{ijl}}}{k\times d \times p}\\
    OKT=\frac{\sum_{i=1}^k\sum_{j=1}^d\sum_{l=1}^p{KT_{ijl}}}{k\times d \times p}
\end{aligned}
\end{equation}
where, equation (\ref{eq:ORD}) shows the overall average DO and DC scores. The equation (\ref{eq:ORK}) shows the overall average KO, KC and KT scores.

\section{Experimental Setup}
The empirical analysis is performed with 25 benchmark functions. The well established algorithms are evaluated with the proposed methodology and verified with the existing studies. The details of empirical analysis such as benchmark functions, algorithms and experimental setup are briefed as follows.
\subsection{Benchmark Functions}
The set of 25 benchmark test problems (F1-F25) that appeared in the CEC-2005 special session on real parameter optimization~\cite{suganthan2005problem}. All these optimization problems are minimization problems, which comprises 5 unimodal and 20 multimodal functions. Out of 20 multimodal functions 7 are basic functions, 2 are expanded functions and 11 are hybrid functions. Optima of all the functions are displaced from the origin or from the previous position to ensure that the solutions can never be obtained at the center of the solution space. This displacement mechanism has made it difficult for the algorithms that have central tendency to converge towards the optima. Hybridization has added more difficulty to the problem so that algorithm unable to follow certain patterns to reach the optima. Details about the functions are as follows:

\begin{list}{\labelitemi}{\leftmargin=1em}
    \item 5 unimodal functions
    \begin{list}{-}{\leftmargin=1em}
        \item F1: Shifted Sphere Function.
        \item F2: Shifted Schwefel’s Problem 1.2.
        \item F3: Shifted Rotated High Conditioned Elliptic Function.
        \item F4: Shifted Schwefel’s Problem 1.2 with Noise.
        \item F5: Schwefel’s Problem 2.6 with Global Optimum on Bounds.
    \end{list}
    \item 20 multimodal functions
    \begin{list}{*}{\leftmargin=1em}
      \item 7 basic functions
      \begin{list}{-}{\leftmargin=1em}
          \item F6: Shifted Rosenbrock’s Function.  
          \item F7: Shifted Rotated Griewank without Bounds.
          \item F8: Shifted Rotated Ackley’s Function with Global Optimum on Bounds.
          \item F9: Shifted Rastrigin’s Function.
          \item F10: Shifted Rotated Rastrigin’s Function.
          \item F11: Shifted Rotated Weierstrass Function.
          \item F12: Schwefel’s problem 2.13.
      \end{list}
      \begin{list}{*}{\leftmargin=1em}
         \item 2 expanded functions
         \begin{list}{-}{\leftmargin=1em}
            \item F13: Expanded Extended Griewank’s plus Rosenbrock’s (EF8F2)
            \item F14: Shifted Rotated Expanded Scaffers F6.
         \end{list} 
         \item 11 hybrid functions (f15 to f25). Each of these functions has been defined through compositions of 10 out of the 14 previous functions.
      \end{list} 
    \end{list}
\end{list}

\subsection{Algorithms}
To analyze the efficacy of proposed prasatul matrix and measures, six widely applied optimization algorithms are considered, which include Ant Lion Optimizer (ALO)~\cite{mirjalili2015ant}, Grey Wolf Optimizer (GWO)~\cite{mirjalili2014grey}, Moth Flame Optimizer (MFO)~\cite{mirjalili2015moth}, Self-adaptive Differential Evolution (SaDE)~\cite{quin2005self}, Sine Cosine Algorithm (SCA)~\cite{mirjalili2016sca}, and Whale Optimization Algorithm (WOA)~\cite{mirjalili2016whale}. All these algorithms are executed in the same environment and same system configuration. 

\subsection{Implementation and System Configuration}
We consider default parameter settings for all the algorithms, except SaDE which is an adaptive version of Differential Evolution (DE) so parameter settings was not required. We consider MATLAB codes for all the algorithms that are available online. The algorithm for generating prasatul matrix as well as D-scores and K-scores, all are implemented in MATLAB R2015a and made available online\footnote{ MATLAB Source code is released under the GNU-GPLv3 license in GitHub (https://github.com/anupambis/Prasatul) as well as made available in MATLAB File Exchange linking the GitHub repository.}.  All the experiments are done on the Computer having Intel (R) Core (TM) i7-8565U CPU @ 1.80 GHz with 8 Cores, 4.6 GHz Speed, NVIDIA GeForce MX130 Graphics card, 16 GB RAM, 1TB HDD and 64-bit (AMD) Windows 10 Operating System.

\subsection{Experiment Environment}
All experiments are carried out with population size 100. For functions F1-F14, dimensions 10, 20 and 30 are considered, and for hybrid functions F15-F25, dimensions 2, 5 and 10 are considered for both quality comparison and convergence comparison. All algorithms are executed over 50 trials up to 1000 generations for observing best values. For convergence comparison, each algorithm is observed run up to 1000 generations and noted best value achieved at each generation. Functions F1–F14 are considered for quality comparison, which include 5 unimodal functions, and rest of the functions are multimodal functions. The functions F15–F25 are considered for convergence comparison, all of them are hybrid multimodal functions. However, for overall ranking all 25 functions i.e. F1-F25 are considered.

\begin{table*}[!ht]
\caption{D-scores obtained for One-to-One and One-to-Many comparison of best values of SaDE with other five algorithms}
\label{tab:Dscores}
    \centering
    \begin{tabular}{|l|c|c|c|c|c|c|c|c|c|c|c|c|}
    \hline
        ~ & \multicolumn{12}{c|}{D 10} \\ \hline
        F & \multicolumn{2}{c|}{SaDE vs ALO} & \multicolumn{2}{c|}{SaDE vs GWO} & \multicolumn{2}{c|}{SaDE vs MFO} & \multicolumn{2}{c|}{SaDE vs SCA} & \multicolumn{2}{c|}{SaDE vs WOA} & \multicolumn{2}{c|}{Average D-scores} \\ \hline
        ~ & DO & DC & DO & DC & DO & DC & DO & DC & DO & DC &ADO &ADC \\ \hline
        F1 & 0.9803 & 0.9803 & 0.9803 & 0.9803 & 0.1568 & 0.8888 & 0.9803 & 0.9803 & 0.9803 & 0.9803 &0.8156&0.9620\\ \hline
        F2 & 0.9803 & 0.9803 & 0.9803 & 0.9803 & 0.9803 & 0.9803 & 0.9803 & 0.9803 & 0.9803 & 0.9803 & 0.9803 & 0.9803\\ \hline
        F3 & 0.99 & 0.5 & 0.99 & 0.5 & 0.99 & 0.5 & 0.99 & 0.5 & 0.99 & 0.5  &0.99& 0.5\\ \hline
        F4 & 0.9803 & 0.9803 & 0.9803 & 0.9803 & 0.9803 & 0.9803 & 0.9803 & 0.9803 & 0.98039 & 0.9803 & 0.9803 & 0.9803\\ \hline
        F5 & 0.99 & 0.5 & 0.99 & 0.5 & 0 & 0 & 0.99 & 0.5 & 0.99 & 0.5  &0.792 & 0.4\\ \hline
        F6 & 0.99 & 0.5 & 0.99 & 0.5 & 0.99 & 0.5 & 0.99 & 0.5 & 0.99 & 0.5  & 0.99 & 0.5\\ \hline
        F7 & 0.0588 & 0.75 & 0.2156 & 0.9166 & 0.2549 & 0.9285 & 0.9607 & 0.98 & 0.9019 & 0.9787 &0.4784 & 0.9107\\ \hline
        F8 & 0 & 0 & 0.5783 & 0.1538 & 0 & 0 & -0.1955 & -0.1764 & 0 & 0  &0.0765 & -0.0045\\ \hline
        F9 & 0.9803 & 0.9803 & 0.9803 & 0.9803 & 0.9803 & 0.9803 & 0.9803 & 0.9803 & 0.9803 & 0.9803 & 0.9803 & 0.9803\\ \hline
        F10 & 0.4898 & 0.4705 & 0.1555 & 0.3529 & 0.99 & 0.5 & 0.99 & 0.5 & 0.99 & 0.5  & 0.7230 & 0.4647\\ \hline
        F11 & -0.2973 & -0.9166 & 0.5 & 0.5 & -0.2236 & -0.8333 & 0.99 & 0.5 & 0.4898 & 0.4705 & 0.2917 & -0.0558 \\ \hline
        F12 & 0.99 & 0.5 & 0.99 & 0.5 & 0.99 & 0.5 & 0.99 & 0.5 & 0.99 & 0.5  & 0.99 & 0.5\\ \hline
        F13 & -0.04 & 0.4583 & -0.2585 & 0.4186 & 0.4313 & 0.4888 & 0.99 & 0.5 & 0.99 & 0.5  & 0.4225 & 0.4731 \\ \hline
        F14 & -0.5906 & -0.1744 & -0.1162 & -0.8333 & 0.3229 & 0.4411 & 0.1555 & 0.3529 & 0.3229 & 0.4411 & 0.0189 &	0.0455\\ \hline
         ~ & \multicolumn{12}{c|}{D 20}\\ \hline
          F1 & 0.9803 & 0.9803 & 0.9803 & 0.9803 & 0.9803 & 0.9803 & 0.9803 & 0.9803 & 0.9803 & 0.9803 & 0.9803 & 0.9803\\ \hline
        F2 & 0.99 & 0.5 & 0.99 & 0.5 & 0.99 & 0.5 & 0.99 & 0.5 & 0.99 & 0.5  &0.99 & 0.5\\ \hline
        F3 & 0.99 & 0.5 & 0.99 & 0.5 & 0.99 & 0.5 & 0.99 & 0.5 & 0.99 & 0.5  &0.99 & 0.5\\ \hline
        F4 & 0.99 & 0.5 & 0.99 & 0.5 & 0.99 & 0.5 & 0.99 & 0.5 & 0.99 & 0.5  &0.99 & 0.5\\ \hline
        F5 & 0.99 & 0.5 & -0.2761 & -0.7142 & 0.4705 & 0.4898 & 0.99 & 0.5 & 0.99 & 0.5  &0.6328 & 0.2551\\ \hline
        F6 & 0.4509 & 0.4893 & 0.99 & 0.5 & 0.99 & 0.5 & 0.99 & 0.5 & 0.99 & 0.5  & 0.8821 & 0.4978\\ \hline
        F7 & 0.9803 & 0.9803 & 0.9803 & 0.9803 & 0 & 0 & 0.9803 & 0.9803 & 0 & 0  &0.5882 & 0.5882\\ \hline
        F8 & 0 & 0 & 0.0984 & -0.6428 & 0 & 0 & 0.1246 & -0.1951 & 0 & 0  & 0.0446 & -0.1675\\ \hline
        F9 & 1.4042 & 0.5980 & 1.4042 & 0.5980 & 1.4042 & 0.5980 & 1.4042 & 0.5980 & 1.4042 & 0.5980 & 1.4042 & 0.5980\\ \hline
        F10 & 0.1555 & 0.3529 & -0.8554 & -0.5652 & 0.2393 & 0.4117 & 0.99 & 0.5 & 0.99 & 0.5  & 0.3038 & 0.2398\\ \hline
        F11 & -0.0816 & -0.8 & 0 & 0 & -0.1627 & -0.875 & 0.99 & 0.5 & 0.1891 & 0.3823 & 0.1869 & -0.1585\\ \hline
        F12 & 0.3229 & 0.4411 & 0.99 & 0.5 & 0.99 & 0.5 & 0.99 & 0.5 & 0.99 & 0.5  & 0.8565 &	0.4882\\ \hline
        F13 & -0.9362 & -0.8863 & -0.9022 & -0.75 & 0.4607 & 0.4895 & 0.99 & 0.5 & 0.99 & 0.5  & 0.1204 & -0.0293\\ \hline
        F14 & -0.8303 & -0.6406 & 0 & 0 & 0.4898 & 0.4705 & -0.02 & 0.46 & 0.1891 & 0.3823 & -0.0342 &	0.1344\\ \hline 
        ~ & \multicolumn{12}{c|}{D 30}\\ \hline
        F1 & 0.99 & 0.5 & 0.99 & 0.5 & 0.99 & 0.5 & 0.99 & 0.5 & 0.99 & 0.5  & 0.99 & 0.5\\ \hline
        F2 & 0.99 & 0.5 & 0.99 & 0.5 & 0.99 & 0.5 & 0.99 & 0.5 & 0.99 & 0.5  & 0.99 & 0.5\\ \hline
        F3 & 0.2393 & 0.4117 & 0.99 & 0.5 & 0.99 & 0.5 & 0.99 & 0.5 & 0.99 & 0.5  & 0.8398 & 0.4823\\ \hline
        F4 & 0.99 & 0.5 & 0.99 & 0.5 & 0.99 & 0.5 & 0.99 & 0.5 & 0.99 & 0.5  &  0.99 & 0.5\\ \hline
        F5 & 0.99 & 0.5 & -0.5398 & 0.2575 & 0.4803 & 0.49 & 0.99 & 0.5 & 0.99 & 0.5  & 0.5821 & 0.4495\\ \hline
        F6 & 0.99 & 0.5 & 0.99 & 0.5 & 0.99 & 0.5 & 0.99 & 0.5 & 0.99 & 0.5  & 0.99 & 0.5\\ \hline
        F7 & 0.9803 & 0.9803 & 0.52941 & 0.9642 & 0.0392 & 0.6666 & 0.9019 & 0.9787 & 0.9607 & 0.98  & 0.6823	& 0.9140\\ \hline
        F8 & 0 & 0 & -0.0322 & -0.5 & 0 & 0 & 0.4604 & -0.1 & 0 & 0  & 0.0856 &	-0.12\\ \hline
        F9 & 0.99 & 0.5 & 0.99 & 0.5 & 0.99 & 0.5 & 0.99 & 0.5 & 0.99 & 0.5  & 0.99 & 0.5\\ \hline
        F10 & -0.5534 & -0.1111 & -0.4333 & -0.9285 & -0.03 & 0.4591 & 0.99 & 0.5 & 0.99 & 0.5  & 0.1926 & 0.0839\\ \hline
        F11 & -0.0217 & -0.5 & 0 & 0 & 0 & 0 & 0.99 & 0.5 & 0.1133 & 0.2941 & 0.2163 &	0.0588\\ \hline
        F12 & 0.3229 & 0.4411 & 0.99 & 0.5 & 0.99 & 0.5 & 0.99 & 0.5 & 0.99 & 0.5  & 0.8565 &	0.4882\\ \hline
        F13 & -0.5072 & 0.2777 & -0.3076 & -0.9230 & 0.4705 & 0.4898 & 0.99 & 0.5 & 0.99 & 0.5  & 0.3271 & 0.1689\\ \hline
        F14 & -0.3488 & -0.9375 & 0 & 0 & -0.3550 & 0.34 & 0.1555 & 0.3529 & -0.5384 & 0.0697 & -0.2173  &	-0.0349\\ \hline
    \end{tabular}
\end{table*}

\begin{table*}[!t]

\caption{K-scores obtained for One-to-One and One-to-Many comparison of best value of SaDE with other five algorithms}
\label{tab:Kscores}
    \centering
    \resizebox{1.8\columnwidth}{!}{
    \begin{tabular}{|l|c|c|c|c|c|c|c|c|c|c|c|c|c|c|c|c|c|c|}
    \hline
        ~ & \multicolumn{18}{c|}{D 10} \\ \hline
        F & \multicolumn{3}{c|}{SaDE vs ALO} & \multicolumn{3}{c|}{SaDE vs GWO} & \multicolumn{3}{c|}{SaDE vs MFO} & \multicolumn{3}{c|}{SaDE vs SCA} & \multicolumn{3}{c|}{SaDE vs WOA} & \multicolumn{3}{c|}{Average K-scores} \\ \hline
        ~ & KO & KC & KT & KO & KC & KT & KO & KC & KT & KO & KC & KT & KO & KC & KT  & AKO & AKC & AKT  \\ \hline
        F1 & 1 & 1 & 1 & 1 & 1 & 1 & 0.58 & 1 & 1 & 1 & 1 & 1 & 1 & 1 & 1  & 0.916 & 1 & 1  \\ \hline
        F2 & 1 & 1 & 1 & 1 & 1 & 1 & 1 & 1 & 1 & 1 & 1 & 1 & 1 & 1 & 1  & 1 & 1 & 1  \\ \hline
        F3 & 1 & 0.51 & 1 & 1 & 0.51 & 1 & 1 & 0.51 & 1 & 1 & 0.51 & 1 & 1 & 0.51 & 1  & 1 & 0.51 & 1  \\ \hline
        F4 & 1 & 1 & 1 & 1 & 1 & 1 & 1 & 1 & 1 & 1 & 1 & 1 & 1 & 1 & 1  & 1 & 1 & 1  \\ \hline
        F5 & 1 & 0.51 & 1 & 1 & 0.51 & 1 & -1 & -0.4 & 0 & 1 & 0.51 & 1 & 1 & 0.51 & 1  & 0.6 & 0.328 & 0.8  \\ \hline
        F6 & 1 & 0.51 & 1 & 1 & 0.51 & 1 & 1 & 0.51 & 1 & 1 & 0.51 & 1 & 1 & 0.51 & 1  & 1 & 0.51 & 1  \\ \hline
        F7 & 0.53 & 1 & 1 & 0.61 & 1 & 1 & 0.63 & 1 & 1 & 0.99 & 1 & 1 & 0.96 & 1 & 1  & 0.744 & 1 & 1  \\ \hline
        F8 & -1 & -0.85 & 0 & -0.37 & -0.33 & 0.26 & -0.97 & -0.73 & 0 & 0.35 & -0.28 & 0.36 & -1 & -0.58 & 0  & -0.598 & -0.554 & 0.124  \\ \hline
        F9 & 1 & 1 & 1 & 1 & 1 & 1 & 1 & 1 & 1 & 1 & 1 & 1 & 1 & 1 & 1  & 1 & 1 & 1  \\ \hline
        F10 & 1 & 0.48 & 0.98 & 1 & 0.36 & 0.9 & 1 & 0.51 & 1 & 1 & 0.51 & 1 & 1 & 0.51 & 1  & 1 & 0.474 & 0.976  \\ \hline
        F11 & -0.56 & -0.58 & 0 & -0.96 & -0.84 & 0.02 & 0.04 & -0.42 & 0.04 & 1 & 0.51 & 1 & 1 & 0.48 & 0.98  & 0.104 & -0.17 & 0.408  \\ \hline
        F12 & 1 & 0.51 & 1 & 1 & 0.51 & 1 & 1 & 0.51 & 1 & 1 & 0.51 & 1 & 1 & 0.51 & 1  & 1 & 0.51 & 1  \\ \hline
        F13 & 0.88 & 0.47 & 0.92 & 0.68 & 0.44 & 0.8 & 0.76 & 0.5 & 0.88 & 1 & 0.51 & 1 & 1 & 0.51 & 1  & 0.864 & 0.486 & 0.92  \\ \hline
        F14 & 0.68 & -0.07 & 0.46 & -0.8 & -0.76 & 0 & 1 & 0.45 & 0.96 & 1 & 0.36 & 0.9 & 1 & 0.45 & 0.96 & 0.576 & 0.086 & 0.656  \\ \hline
         ~ & \multicolumn{18}{c|}{D 20} \\ \hline
                 F1 & 1 & 1 & 1 & 1 & 1 & 1 & 1 & 1 & 1 & 1 & 1 & 1 & 1 & 1 & 1 & 1 & 1 & 1  \\ \hline
        F2 & 1 & 0.51 & 1 & 1 & 0.51 & 1 & 1 & 0.51 & 1 & 1 & 0.51 & 1 & 1 & 0.51 & 1 & 1 & 0.51 & 1  \\ \hline
        F3 & 1 & 0.51 & 1 & 1 & 0.51 & 1 & 1 & 0.51 & 1 & 1 & 0.51 & 1 & 1 & 0.51 & 1 & 1 & 0.51 & 1  \\ \hline
        F4 & 1 & 0.51 & 1 & 1 & 0.51 & 1 & 1 & 0.51 & 1 & 1 & 0.51 & 1 & 1 & 0.51 & 1 & 1 & 0.51 & 1  \\ \hline
        F5 & 1 & 0.51 & 1 & -0.2 & -0.09 & 0.06 & 0.92 & 0.5 & 0.96 & 1 & 0.51 & 1 & 1 & 0.51 & 1 & 0.744 & 0.388 & 0.804  \\ \hline
        F6 & 0.84 & 0.5 & 0.92 & 1 & 0.51 & 1 & 1 & 0.51 & 1 & 1 & 0.51 & 1 & 1 & 0.51 & 1 & 0.968 & 0.508 & 0.984  \\ \hline
        F7 & 1 & 1 & 1 & 1 & 1 & 1 & -1 & -1 & 0 & 1 & 1 & 1 & -1 & -1 & 0 & 0.2 & 0.2 & 0.6  \\ \hline
        F8 & -1 & -0.97 & 0 & -0.2 & -0.45 & 0.08 & -1 & -0.97 & 0 & 0.63 & -0.21 & 0.42 & -1 & -0.85 & 0 & -0.514 & -0.69 & 0.1  \\ \hline
        F9 & 1 & 0.61 & 1 & 1 & 0.61 & 1 & 1 & 0.61 & 1 & 1 & 0.61 & 1 & 1 & 0.61 & 1 & 1 & 0.61 & 1  \\ \hline
        F10 & 1 & 0.36 & 0.9 & -0.12 & 0.02 & 0.12 & 1 & 0.42 & 0.94 & 1 & 0.51 & 1 & 1 & 0.51 & 1 & 0.776 & 0.364 & 0.792  \\ \hline
        F11 & -0.84 & -0.94 & 0 & -1 & -1 & 0 & -0.72 & -0.76 & 0 & 1 & 0.51 & 1 & 1 & 0.39 & 0.92 & -0.112 & -0.36 & 0.384  \\ \hline
        F12 & 1 & 0.45 & 0.96 & 1 & 0.51 & 1 & 1 & 0.51 & 1 & 1 & 0.51 & 1 & 1 & 0.51 & 1 & 1 & 0.498 & 0.992  \\ \hline
        F13 & -0.16 & -0.1 & 0.02 & -0.4 & 0.11 & 0.04 & 0.88 & 0.5 & 0.94 & 1 & 0.51 & 1 & 1 & 0.51 & 1 & 0.464 & 0.306 & 0.6  \\ \hline
        F14 & 0.24 & -0.22 & 0.14 & -1 & -1 & 0 & 1 & 0.48 & 0.98 & 0.96 & 0.47 & 0.96 & 1 & 0.39 & 0.92 & 0.44 & 0.024 & 0.6  \\ \hline
         ~ & \multicolumn{18}{c|}{D 30} \\ \hline
               F1 & 1 & 0.51 & 1 & 1 & 0.51 & 1 & 1 & 0.51 & 1 & 1 & 0.51 & 1 & 1 & 0.51 & 1 & 1 & 0.51 & 1  \\ \hline
        F2 & 1 & 0.51 & 1 & 1 & 0.51 & 1 & 1 & 0.51 & 1 & 1 & 0.51 & 1 & 1 & 0.51 & 1 & 1 & 0.51 & 1  \\ \hline
        F3 & 1 & 0.42 & 0.94 & 1 & 0.51 & 1 & 1 & 0.51 & 1 & 1 & 0.51 & 1 & 1 & 0.51 & 1 & 1 & 0.492 & 0.988  \\ \hline
        F4 & 1 & 0.51 & 1 & 1 & 0.51 & 1 & 1 & 0.51 & 1 & 1 & 0.51 & 1 & 1 & 0.51 & 1 & 1 & 0.51 & 1  \\ \hline
        F5 & 1 & 0.51 & 1 & 0.28 & 0.35 & 0.54 & 0.96 & 0.5 & 0.98 & 1 & 0.51 & 1 & 1 & 0.51 & 1 & 0.848 & 0.476 & 0.904  \\ \hline
        F6 & 1 & 0.51 & 1 & 1 & 0.51 & 1 & 1 & 0.51 & 1 & 1 & 0.51 & 1 & 1 & 0.51 & 1 & 1 & 0.51 & 1  \\ \hline
        F7 & 1 & 1 & 1 & 0.77 & 1 & 1 & 0.52 & 1 & 1 & 0.96 & 1 & 1 & 0.99 & 1 & 1 & 0.848 & 1 & 1  \\ \hline
        F8 & -1 & -1 & 0 & -0.93 & -0.39 & 0.02 & -1 & -1 & 0 & -0.81 & -0.39 & 0.04 & -1 & -0.91 & 0 & -0.948 & -0.738 & 0.012  \\ \hline
        F9 & 1 & 0.51 & 1 & 1 & 0.51 & 1 & 1 & 0.51 & 1 & 1 & 0.51 & 1 & 1 & 0.51 & 1 & 1 & 0.51 & 1  \\ \hline
        F10 & 0.76 & -0.04 & 0.52 & -0.48 & -0.37 & 0 & 0.92 & 0.47 & 0.94 & 1 & 0.51 & 1 & 1 & 0.51 & 1 & 0.64 & 0.216 & 0.692  \\ \hline
        F11 & -0.96 & -0.85 & 0 & -1 & -1 & 0 & -1 & -0.94 & 0 & 1 & 0.51 & 1 & 1 & 0.3 & 0.86 & -0.192 & -0.396 & 0.372  \\ \hline
        F12 & 1 & 0.45 & 0.96 & 1 & 0.51 & 1 & 1 & 0.51 & 1 & 1 & 0.51 & 1 & 1 & 0.51 & 1 & 1 & 0.498 & 0.992  \\ \hline
        F13 & 0.4 & 0.35 & 0.6 & -0.52 & -0.64 & 0 & 0.92 & 0.5 & 0.96 & 1 & 0.51 & 1 & 1 & 0.51 & 1 & 0.56 & 0.246 & 0.712  \\ \hline
        F14 & -0.4 & -0.76 & 0 & -1 & -1 & 0 & 0.96 & 0.35 & 0.88 & 1 & 0.36 & 0.9 & 0.68 & 0.14 & 0.6 & 0.248 & -0.182 & 0.476  \\ \hline
    \end{tabular}
    }
\end{table*}    

\begin{table}[!ht]
    \centering
    \caption{D-scores and K-scores obtained for One-to-One and One-to-Many comparison of Convergence of ALO with other four algorithms in 10 dimensions}
    \label{tab:CDKscores}
    \resizebox{\columnwidth}{!}{
    \begin{tabular}{|@{\hspace{2pt}}l@{\hspace{2pt}}|@{\hspace{2pt}}c@{\hspace{2pt}}|@{\hspace{2pt}}c@{\hspace{2pt}}|@{\hspace{2pt}}c@{\hspace{2pt}}|@{\hspace{2pt}}c@{\hspace{2pt}}|@{\hspace{2pt}}c@{\hspace{2pt}}|@{\hspace{2pt}}c@{\hspace{2pt}}|@{\hspace{2pt}}c@{\hspace{2pt}}|@{\hspace{2pt}}c@{\hspace{2pt}}|@{\hspace{1pt}}c@{\hspace{2pt}}|@{\hspace{2pt}}c@{\hspace{1pt}}|}
    \hline
        \multirow{3}{*}{F} & \multicolumn{2}{@{\hspace{2pt}}c@{\hspace{2pt}}|}{ALO vs GWO} & \multicolumn{2}{@{\hspace{2pt}}c@{\hspace{2pt}}|}{ALO vs MFO} & \multicolumn{2}{@{\hspace{2pt}}c@{\hspace{2pt}}|}{ALO vs SCA} & \multicolumn{2}{@{\hspace{2pt}}c@{\hspace{2pt}}|}{ALO vs WOA} & \multicolumn{2}{@{\hspace{2pt}}c@{\hspace{1pt}}|}{Average Scores} \\ \hline
          &\multicolumn{8}{@{\hspace{1pt}}c@{\hspace{1pt}}|}{D-scores}& \multicolumn{2}{@{\hspace{2pt}}c@{\hspace{1pt}}|}{Average D-scores}\\ \hline
        ~ & DO & DC & DO & DC & DO & DC & DO & DC & ADO & ADC  \\ \hline
        F15 & 0.7964 & 0.3976 & 0.8207 & 0.3955 & 0.4421 & 0.3477 & 1.3435 & 0.4906 & 0.8507 & 0.4078  \\ \hline
        F16 & 0.2300 & 0.4554 & 1.0321 & 0.4313 & 0.3840 & 0.3472 & 0.3839 & 0.3457 & 0.5075 & 0.3949  \\ \hline
        F17 & 0.1876 & 0.4341 & 0.8882 & 0.4095 & 0.2591 & 0.3467 & 0.2590 & 0.3452 & 0.3984 & 0.3839  \\ \hline
        F18 & 0.1680 & 0.4597 & -0.0240 & -0.9600 & 0.4323 & 0.3542 & 0.5108 & 0.3651 & 0.2718 & 0.0547  \\ \hline
        F19 & -0.1040 & 0.3318 & -0.0240 & -0.9600 & 0.4322 & 0.3527 & 0.4335 & 0.3706 & 0.1845 & 0.0238  \\ \hline
        F20 & 0.1325 & 0.4526 & -0.0250 & -0.9615 & -0.5098 & 0.3413 & 0.3982 & 0.3537 & -0.0010 & 0.0465  \\ \hline
        F21 & 0.5738 & 0.3724 & -0.0569 & -0.9828 & 0.4798 & 0.3646 & 0.4796 & 0.3616 & 0.3691 & 0.0290  \\ \hline
        F22 & -0.6865 & -0.9004 & 0.6100 & 0.0629 & 0.1760 & 0.3507 & 0.1786 & 0.3821 & 0.0695 & -0.0262  \\ \hline
        F23 & 0.6594 & 0.3917 & -0.0490 & -0.9800 & 0.4846 & 0.3891 & 0.4832 & 0.3711 & 0.3946 & 0.0430  \\ \hline
        F24 & 0.1822 & 0.4630 & 0.5024 & 0.4216 & 0.6093 & 0.4935 & 0.5096 & 0.4860 & 0.4508 & 0.4660  \\ \hline
        F25 & -0.0037 & -0.6667 & -0.0130 & -0.9286 & 0.2472 & 0.3464 & -0.0020 & -0.6667 & 0.0571 & -0.4789  \\ \hline
         \multirow{2}{*}{}   &\multicolumn{8}{@{\hspace{1pt}}c@{\hspace{1pt}}|}{K-scores}& \multicolumn{2}{@{\hspace{1pt}}c@{\hspace{1pt}}|}{Average K-scores}\\ \hline
                 ~ & KO & KC & KO & KC & KO & KC & KO & KC & AKO & AKC  \\ \hline
        F15 & 0.922 & 0.3435 & 0.912 & 0.3345 & 1 & 0.348 & 0.798 & 0.3405 & 0.908 & 0.3416  \\ \hline
        F16 & 0.432 & 0.3125 & 0.788 & 0.31 & 1 & 0.3475 & 1 & 0.346 & 0.805 & 0.3290  \\ \hline
        F17 & 0.56 & 0.293 & 0.8 & 0.2735 & 1 & 0.347 & 1 & 0.3455 & 0.84 & 0.3148  \\ \hline
        F18 & 0.066 & 0.332 & -0.952 & -1 & 1 & 0.3545 & 0.984 & 0.3545 & 0.2745 & 0.0103  \\ \hline
        F19 & -0.336 & 0.299 & -0.952 & -1 & 1 & 0.353 & 1 & 0.371 & 0.178 & 0.0057  \\ \hline
        F20 & -0.032 & 0.3305 & -0.95 & -1 & 0.932 & 0.347 & 1 & 0.354 & 0.2375 & 0.0079  \\ \hline
        F21 & 0.98 & 0.359 & -0.886 & -1 & 1 & 0.365 & 1 & 0.362 & 0.5235 & 0.0215  \\ \hline
        F22 & -0.5 & -0.001 & -0.716 & 0.015 & 1 & 0.351 & 1 & 0.3825 & 0.196 & 0.1869  \\ \hline
        F23 & 0.964 & 0.367 & -0.902 & -1 & 1 & 0.3895 & 1 & 0.3715 & 0.5155 & 0.0320  \\ \hline
        F24 & 0.16 & 0.344 & 1 & 0.422 & 0.988 & 0.485 & 1 & 0.4865 & 0.787 & 0.4344  \\ \hline
        F25 & -0.996 & -0.3115 & -0.974 & -1 & 0.984 & 0.336 & -0.996 & -1 & -0.4955 & -0.4939  \\ \hline
    \end{tabular}
    }
\end{table}

\begin{table*}[!t]
 \caption{Problem-wise best value ranking based on D-scores and K-scores with 10, 20 and 30 dimensions combined}
 \label{tab:psqrank}
    \centering
    \resizebox{2\columnwidth}{!}{
    \begin{tabular}{|l|c|c|c|c|c|c|c|c|c|c|c|c|c|c|}
    \hline
       
         \multirow{2}{*}{Algos}&\multirow{2}{*}{F1}&\multirow{2}{*}{F2}&\multirow{2}{*}{F3}&\multirow{2}{*}{F4}&\multirow{2}{*}{F5}&\multirow{2}{*}{F6}&\multirow{2}{*}{F7}&\multirow{2}{*}{F8}&\multirow{2}{*}{F9}&\multirow{2}{*}{F10}&\multirow{2}{*}{F11}&\multirow{2}{*}{F12}&\multirow{2}{*}{F13}&\multirow{2}{*}{F14}  \\ 
         &&&&&&&&&&&&&&\\ \hline
         &\multicolumn{14}{c|}{PDO}\\ \hline
        ALO & 0.6939 & 0.7482 & 0.6774 & 0.3016 & 0.3153 & 0.7048 & 0.0337 & 0.2576 & 0.1326 & 0.1720 & 0.1860 & 0.4016 & 0.5255 & 0.2444 \\ \hline
        GWO & 0.2967 & 0.2424 & 0.1379 & 0.2806 & 0.3601 & 0.3379 & 0.4058 & -0.1254 & 0.1538 & 0.5165 & 0.4600 & 0.0985 & 0.5226 & 0.3500 \\ \hline
        MFO & 0.3932 & 0.4003 & 0.2317 & 0.6021 & 0.5983 & 0.3623 & 0.7195 & 0.2941 & 0.1903 & 0.3387 & 0.0962 & 0.0880 & 0.3901 & 0.1519 \\ \hline
        SaDE & 0.9286 & 0.9867 & 0.9399 & 0.9867 & 0.6689 & 0.9540 & 0.5830 & 0.0689 & 1.1248 & 0.4065 & 0.2316 & 0.9010 & 0.2900 & -0.0775 \\ \hline
        SCA & -0.0013 & 0.1777 & -0.0033 & 0.1545 & -0.0764 & -0.0202 & 0.2322 & 0.0388 & -0.0779 & -0.1472 & -0.0265 & -0.0235 & -0.0310 & -0.0858 \\ \hline
        WOA & 0.4606 & 0 & 0.0284 & 0 & 0.1126 & 0.4205 & 0.4969 & 0.0237 & 0.095 & 0.1654 & 0.0631 & 0.0320 & 0.0075 & 0.0672 \\ \hline
         &\multicolumn{14}{c|}{PDC}\\ \hline
               ALO & 0.3955 & 0.3990 & 0.3919 & 0.1480 & 0.0861 & 0.397 & 0.0953 & 0.3313 & 0.0849 & 0.0604 & 0.2132 & 0.3177 & 0.1826 & 0.1294 \\ \hline
        GWO & 0.1659 & 0.2033 & 0.2508 & 0.2376 & 0.2934 & 0.2048 & 0.3697 & -0.1885 & 0.2712 & 0.3160 & 0.4175 & 0.0718 & 0.3712 & 0.3578 \\ \hline
        MFO & 0.3898 & 0.2686 & 0.2373 & 0.3134 & 0.5993 & 0.2416 & 0.6043 & 0.2889 & 0.0144 & 0.1524 & 0.1348 & 0.1545 & 0.2817 & -0.0057 \\ \hline
        SaDE & 0.8141 & 0.6601 & 0.4941 & 0.6601 & 0.3682 & 0.4992 & 0.8043 & -0.0973 & 0.6928 & 0.2628 & -0.0518 & 0.4921 & 0.2042 & 0.04833 \\ \hline
        SCA & -0.0333 & -0.0055 & -0.0666 & -0.013 & 0.0295 & -0.1055 & 0.1421 & -0.0778 & -0.1347 & -0.1124 & -0.1059 & -0.1125 & -0.2133 & -0.3830 \\ \hline
        WOA & 0.2948 & 0 & -0.0884 & 0 & 0.0888 & 0.2562 & 0.2474 & -0.0042 & 0.0477 & 0.1 & 0.0425 & 0.0293 & -0.0741 & -0.0344 \\ \hline
         &\multicolumn{14}{c|}{PKO}\\ \hline
        ALO & 0.472 & 0.5573 & 0.5866 & 0.1253 & 0.1733 & 0.4826 & -0.3813 & 0.9493 & 0.0433 & 0.2746 & 0.4693 & 0.464 & 0.3253 & 0.3226 \\ \hline
        GWO & -0.3306 & 0.0453 & 0.1893 & 0.32 & 0.4213 & -0.336 & 0.1406 & -0.336 & 0.552 & 0.6826 & 0.9946 & 0.1173 & 0.7333 & 0.984  \\ \hline
        MFO & -0.124 & -0.0373 & -0.232 & 0.0933 & 0.2746 & -0.2106 & 0.636 & 0.6193 & 0.006 & -0.168 & 0.2026 & -0.1226 & -0.1626 & -0.6773 \\ \hline
        SaDE & 0.972 & 1 & 1 & 1 & 0.7306 & 0.9893 & 0.5973 & -0.6866 & 1 & 0.8053 & -0.0666 & 1 & 0.6293 & 0.4213 \\ \hline
        SCA & -0.9973 & -0.5653 & -0.9946 & -0.5386 & -0.712 & -0.9786 & -0.3486 & -0.7473 & -0.8453 & -0.7226 & -0.9626 & -0.968 & -0.9466 & -0.605 \\ \hline
        WOA & 0.064 & -1 & -0.5493 & -1 & -0.888 & 0.0533 & -0.0213 & 0.22 & -0.752 & -0.872 & -0.6373 & -0.4906 & -0.5786 & -0.4453 \\ \hline
         &\multicolumn{14}{c|}{PKC}\\ \hline
        ALO & 0.2166 & 0.3073 & 0.266 & 0.0566 & -0.0593 & 0.386 & -0.2393 & 0.3513 & -0.1653 & 0.0533 & -0.034 & 0.214 & 0.1433 & -0.182  \\ \hline
        GWO & -0.0426 & 0.014 & 0.0453 & 0.1493 & 0.2186 & 0.132 & 0.176 & -0.64 & 0.0906 & 0.1773 & 0.4266 & 0.0526 & 0.236 & 0.356  \\ \hline
        MFO & -0.0046 & -0.0713 & -0.0013 & -0.042 & 0.2106 & 0.0193 & 0.6706 & 0.1573 & -0.172 & -0.176 & -0.106 & -0.0966 & 0.0266 & -0.6006 \\ \hline
        SaDE & 0.8366 & 0.6733 & 0.504 & 0.6733 & 0.3973 & 0.5093 & 0.7333 & -0.6606 & 0.7066 & 0.3513 & -0.3086 & 0.502 & 0.346 & -0.024  \\ \hline
        SCA & -0.994 & -0.65 & -0.782 & -0.584 & -0.6993 & -0.738 & -0.3393 & -0.6746 & -0.82 & -0.784 & -0.936 & -0.842 & -0.92 & -0.642  \\ \hline
        WOA & 0.072 & -0.9 & -0.386 & -0.916 & -0.5466 & 0.2313 & -0.0933 & -0.258 & -0.694 & -0.728 & -0.652 & -0.186 & -0.512 & -0.5493 \\ \hline
         &\multicolumn{14}{c|}{PKT}\\ \hline
        ALO & 0.736 & 0.7786 & 0.788 & 0.5013 & 0.512 & 0.7226 & 0.3253 & 0.872 & 0.4333 & 0.54 & 0.5946 & 0.6773 & 0.596 & 0.4853 \\ \hline
        GWO & 0.328 & 0.4866 & 0.5373 & 0.6013 & 0.6426 & 0.3293 & 0.5906 & 0.1466 & 0.696 & 0.748 & 0.9426 & 0.4986 & 0.788 & 0.8973 \\ \hline
        MFO & 0.432 & 0.4493 & 0.3546 & 0.5053 & 0.6186 & 0.3786 & 0.8413 & 0.6986 & 0.42 & 0.3773 & 0.4826 & 0.364 & 0.3906 & 0.0973 \\ \hline
        SaDE & 1 & 1 & 0.996 & 1 & 0.836 & 0.9946 & 0.8666 & 0.0786 & 1 & 0.82 & 0.388 & 0.9946 & 0.744 & 0.5773 \\ \hline
        SCA & 0 & 0.2 & 0 & 0.2 & 0.112 & 0 & 0.3133 & 0.0693 & 0.0333 & 0.0653 & 0 & 0 & 0 & 0.0666 \\ \hline
        WOA & 0.532 & 0 & 0.176 & 0 & 0.056 & 0.5253 & 0.488 & 0.432 & 0.0933 & 0.064 & 0.1453 & 0.2146 & 0.152 & 0.1546 \\ \hline
    \end{tabular}
    }
\end{table*}

\begin{table*}[t]
    \centering
    \caption{Problem-wise convergence ranking  based on KT score with 2, 5 and 10 dimensions combined}
    \label{tab:pconrank}
    \begin{tabular}{|l|c|c|c|c|c|c|c|c|c|c|c|}
    \hline
        & \multicolumn{11}{c|}{\multirow{2}{*}{PKT}}\\
         &\multicolumn{11}{c|}{}\\ \hline
        Algorithm & F15 & F16 & F17 & F18 & F19 & F20 & F21 & F22 & F23 & F24 & F25  \\ \hline
        
        ALO & 0.6110 & 0.6731 & 0.6478 & 0.4248 & 0.4482 & 0.5994 & 0.7578 & 0.4666 & 0.4909 & 0.8048 & 0.6153  \\ \hline
        GWO & 0.3531 & 0.5968 & 0.3857 & 0.3500 & 0.4957 & 0.1788 & 0.2778 & 0.5734 & 0.3920 & 0.6357 & 0.2607  \\ \hline
        MFO & 0.6379 & 0.6646 & 0.6130 & 0.6294 & 0.6000 & 0.6753 & 0.6337 & 0.2896 & 0.6028 & 0.4769 & 0.6024  \\ \hline
        SCA & 0.3249 & 0.3174 & 0.4443 & 0.3878 & 0.3588 & 0.5074 & 0.3717 & 0.5738 & 0.5378 & 0.1458 & 0.3227  \\ \hline
        WOA & 0.3154 & 0.0561 & 0.1610 & 0.3764 & 0.3432 & 0.3455 & 0.2118 & 0.2494 & 0.2247 & 0.3937 & 0.5212  \\ \hline
    \end{tabular}
\end{table*}

\begin{table}[t]
    \caption{Overall best value ranking based on D-scores and K-scores}
    \label{tab:osqrank}
    \centering
    \begin{tabular}{|c|c|c|c|c|c|c|}
    \hline
    
    \multirow{3}{*}{Algorithms}& \multicolumn{5}{c|}{\multirow{2}{*}{Overall Best Value Ranking F1-F14}}\\
        &\multicolumn{5}{c|}{}\\
    \cline{2-6}
        & ODO & ODC& OKO & OKC & OKT\\ 
     \hline
 ALO  & 0.3853 & 0.2309 & 0.3474 & 0.0939 & 0.6116 \\ \hline
        GWO  & 0.2884 & 0.2387 & 0.2984 & 0.0994 & 0.5880 \\ \hline
        MFO  & 0.3469 & 0.2625 & 0.0069 & -0.0132 & 0.4579 \\ \hline
        SaDE  & 0.6424 & 0.4179 & 0.6708 & 0.3742 & 0.8068 \\ \hline
        SCA  & 0.0078 & -0.0852 & -0.7809 & -0.7432 & 0.0757 \\ \hline
        WOA & 0.1409 & 0.0647 & -0.4310 & -0.4546 & 0.2432 \\ 
    \hline    
    &\multicolumn{5}{c|}{Overall Best Value Ranking F15-F25}  \\ \hline
        ALO  & 0.3495 & 0.1527 & 0.291 & -0.0472 & 0.4892  \\ \hline
        GWO  & 0.2434 & 0.0452 & 0.1437 & -0.0817 & 0.4224  \\ \hline
        MFO  & 0.2837 & 0.1639 & 0.2358 & -0.0734 & 0.472 \\ \hline
        SCA  & -0.0279 & 0.013 & -0.1645 & -0.1623 & 0.3035  \\ \hline
        WOA  & 0.3104 & 0.1816 & -0.2939 & -0.2680 & 0.28 \\ \hline
    \end{tabular}

\end{table}

\begin{table}[t]
    \caption{Overall convergence ranking based on D-scores and K-scores}
    \label{tab:oconrank}
    \centering
    \begin{tabular}{|c|c|c|c|c|c|c|}
    \hline
     \multirow{3}{*}{Algorithms}& \multicolumn{5}{c|}{\multirow{2}{*}{Overall Best Value Ranking F1-F14}}\\
        &\multicolumn{5}{c|}{}\\
    \cline{2-6}
        & ODO & ODC& OKO & OKC & OKT\\ 
     \hline
        ALO  & 0.2454 & 0.1427 & 0.3657 & 0.1679 & 0.6126  \\ \hline
        GWO  & -0.0803 & -0.0271 & 0.2046 & 0.0906 & 0.5224  \\ \hline
        MFO  & 0.0142 & 0.1372 & 0.0325 & 0.1819 & 0.4741  \\ \hline
        SaDE  & 0.2455 & 0.2642 & 0.5956 & 0.1990 & 0.7371  \\ \hline
        SCA  & 0.3693 & 0.1064 & -0.8321 & -0.4795 & 0.0706  \\ \hline
        WOA  & -0.1002 & -0.3047 & -0.3661 & -0.3229 & 0.2648  \\ \hline
    &\multicolumn{5}{c|}{Overall Convergence Ranking F15-F25}  \\ \hline
        ALO  & 0.3186 & 0.0707 & 0.2934 & 0.0552 & 0.5945  \\ \hline
        GWO  & 0.0731 & -0.1047 & -0.057 & -0.1975 & 0.4091  \\ \hline
        MFO  & 0.3921 & 0.2616 & 0.243 & 0.1557 & 0.5841  \\ \hline
        SCA  & 0.5898 & 0.0641 & -0.1478 & -0.2147 & 0.3902  \\ \hline
        WOA  & 0.0374 & -0.2167 & -0.3197 & -0.3484 & 0.2908  \\ \hline
    \end{tabular}

\end{table}

\begin{table}[!ht]
    \centering
    \caption{Wilcoxon rank-sum test p-values with $5\%$ significance for F1–F14 with 30 dimensions for SaDE paired with other five algorithms}
    \label{tab:wrank}
    \resizebox{\columnwidth}{!}{
    \begin{tabular}{|l|l|l|l|l|l|l|l|l|l|l|}
    \hline
         \multirow{2}{*}{F}&\multicolumn{2}{c|}{SaDE vs ALO} & \multicolumn{2}{c|}{SaDE vs GWO} & \multicolumn{2}{c|}{SaDE vs MFO} & \multicolumn{2}{c|}{SaDE vs SCA} & \multicolumn{2}{c|}{SaDE vs WOA}   \\ \cline{2-11}
        ~ & p-value & T & p-value & T & p-value & T & p-value & T & p-value & T  \\ \hline
        F1 & 7.00E-18 & + & 7.00E-18 & + & 7.00E-18 & + & 7.00E-18 & + & 7.00E-18 & +  \\ \hline
        F2 & 2.00E-17 & + & 7.00E-18 & + & 7.00E-18 & + & 7.00E-18 & + & 7.00E-18 & +  \\ \hline
        F3 & 5.00E-15 & + & 7.00E-18 & + & 8.00E-18 & + & 7.00E-18 & + & 7.00E-18 & +  \\ \hline
        F4 & 7.00E-18 & + & 8.00E-18 & + & 2.00E-17 & + & 7.00E-18 & + & 7.00E-18 & +  \\ \hline
        F5 & 7.00E-18 & + & 2.00E-02 & + & 5.00E-16 & + & 7.00E-18 & + & 7.00E-18 & +  \\ \hline
        F6 & 3.00E-14 & + & 7.00E-18 & + & 7.00E-18 & + & 7.00E-18 & + & 7.00E-18 & +  \\ \hline
        F7 & 3.00E-20 & + & 1.00E-09 & + & 2.00E-01 & - & 3.00E-18 & + & 7.00E-20 & +  \\ \hline
        F8 & 7.00E-18 & + & 7.00E-02 & - & 7.00E-18 & + & 2.00E-01 & - & 3.00E-15 & +  \\ \hline
        F9 & 7.00E-18 & + & 7.00E-18 & + & 7.00E-18 & + & 7.00E-18 & + & 7.00E-18 & +  \\ \hline
        F10 & 1.00E-02 & + & 1.00E-03 & + & 3.00E-10 & + & 7.00E-18 & + & 7.00E-18 & +  \\ \hline
        F11 & 5.00E-11 & + & 7.00E-18 & + & 3.00E-12 & + & 7.00E-18 & + & 9.00E-11 & +  \\ \hline
        F12 & 6.00E-12 & + & 8.00E-18 & + & 7.00E-18 & + & 7.00E-18 & + & 7.00E-18 & +  \\ \hline
        F13 & 2.00E-02 & + & 1.00E-05 & + & 4.00E-14 & + & 7.00E-18 & + & 7.00E-18 & +  \\ \hline
        F14 & 2.00E-03 & + & 7.00E-18 & + & 3.00E-09 & + & 8.00E-13 & + & 3.00E-05 & +  \\ \hline
    \end{tabular}
    }
\end{table}

\begin{table}[!ht]
    \centering
    \caption{Wilcoxon rank-sum test p-values with $5\%$ significance for F15–F25 with 10 dimensions for ALO paired with other four algorithms}
    \label{tab:wrank2}
    \resizebox{\columnwidth}{!}{
    \begin{tabular}{|l|l|l|l|l|l|l|l|l|}
    \hline
         \multirow{2}{*}{F}&\multicolumn{2}{c|}{ALO vs GWO} & \multicolumn{2}{c|}{ALO vs MFO} & \multicolumn{2}{c|}{ALO vs SCA} & \multicolumn{2}{c|}{ALO vs WOA}   \\ \cline{2-9}
        ~ & p-value & T & p-value & T & p-value & T & p-value & T  \\ \hline
        F15 & 4.00E-02 & + & 2.00E-02 & + & 3.00E-01 & - & 8.00E-04 & +  \\ \hline
        F16 & 9.00E-03 & + & 1.00E-01 & - & 2.00E-16 & + & 2.00E-14 & +  \\ \hline
        F17 & 2.00E-02 & + & 6.00E-02 & - & 3.00E-17 & + & 5.00E-17 & +  \\ \hline
        F18 & 8.00E-01 & - & 2.00E-05 & + & 9.00E-01 & - & 9.00E-07 & +  \\ \hline
        F19 & 9.00E-01 & - & 1.00E-07 & + & 6.00E-01 & - & 3.00E-10 & +  \\ \hline
        F20 & 9.00E-01 & - & 1.00E-04 & + & 9.00E-02 & - & 3.00E-07 & +  \\ \hline
        F21 & 2.00E-01 & - & 8.00E-02 & - & 2.00E-01 & - & 2.00E-07 & +  \\ \hline
        F22 & 8.00E-01 & - & 7.00E-01 & - & 9.00E-09 & + & 2.00E-12 & +  \\ \hline
        F23 & 5.00E-02 & + & 4.00E-04 & + & 8.00E-01 & - & 1.00E-06 & +  \\ \hline
        F24 & 4.00E-04 & + & 8.00E-07 & + & 5.00E-15 & + & 8.00E-14 & +  \\ \hline
        F25 & 1.00E-09 & + & 7.00E-18 & + & 1.00E-13 & + & 1.00E-08 & +  \\ \hline
    \end{tabular}
    }
\end{table}

\section{Result Analysis}
In general, the performance of evolutionary optimization algorithms are compared on two grounds: 1) solution quality i.e. how optimal the solutions are and 2) convergence i.e. how fast the algorithm moves towards the optima. With the proposed prasatul matrix, algorithms are measured  as well as ranked on both the grounds. Also, verified the outcomes of prasatul matrix $\mathcal{L}$  with that of popularly used Wilcoxon paired rank-sum test results.  The interpretation of results is very simple, the rule-of-thumbs for end-users is highly positive (+ve) values imply good performance and negative (-ve) implies bad performance of primary algorithm.

\subsection{Solution Quality Comparison}
Best values obtained for 50 trials of each of the six algorithms are considered. Algorithms are paired (as explained in Section~\ref{sec3.1}) to generate the prasatul matrix $\mathcal{L}$  and to obtain D-scores and K-scores thereby. D-scores and K-scores obtained with all pairs of SaDE and other five algorithms with dimensions 10, 20 and 30 are presented in Table~\ref{tab:Dscores} and Table~\ref{tab:Kscores} respectively. It is clear for D-scores that SaDE performs better in most of the functions across all the three dimensions as indicated by +ve high values. However, SaDE performs poor in F7, F8, F11, F13 and F14 in some cases as indicated by low and -ve values. In F13, though DO score is -ve but DC scores are mostly +ve, which indicates that despite most of the solutions are worst in terms of Optimality but better in comparison to others. ADO as well as ADC scores also indicate the same. Similarly, K-scores also indicate that mostly SaDE performs better in terms of solution quality i.e. better both in terms of Optimality and Comparability in comparison to other five algorithms across all three dimensions.

\subsection{Convergence Comparison}
For convergence comparison using proposed direct comparison approach, best values obtained in each of the 1000 iterations over 50 trials of each of the six algorithms are averaged iteration-wise. Thereafter, followed the direct comparison to obtain prasatul matrices and respective  D-scores and K-scores. D-scores and K-scores obtained with all pairs of ALO and other four algorithms with 10 dimensions are presented in Table~\ref{tab:CDKscores}. SaDE could not be executed on F23 so it is not considered for convergence comparison. As indicated by both D-scores and K-score, mostly ALO performs better than others in terms of convergence. However, in some functions such as F19, F22 and F25, convergence of ALO is poor compared to others. Also, ALO performs worst in comparison to MFO in most of the functions as indicated by -ve values of both D-scores as well as K-scores. Though, ADO, ADO, AKO and AKC scores indicate that convergence of ALO is better than others but the scores are quite low.

\subsection{Solution Quality and Convergence Ranking}

Ranking with direction comparison approach is done in two aspects. First, the problem-wise ranking, where algorithms are ranked on basis of specific problems taking into account the different dimensions together. Second, the overall ranking, where different problems as well as their dimensionalities are considered together to get overall performance of an algorithm in comparison to other alternatives.  Both kinds of ranking are done on the grounds of both solution quality and algorithm convergence.

\subsubsection{Problem-wise Ranking}
Problem-wise ranking on F1-F14 in terms of solution quality of the algorithms is presented in Table~\ref{tab:psqrank}. As indicated by the PDO, PDC, PKO, PKO, PKC, and PKT scores, clearly SaDE is best performing algorithm in most the functions and produces good quality solutions. Interestingly, ALO and MFO seems to be strong competitors but mostly ALO ranked second after SaDE. SCA is worst performing algorithm and ranked last in almost all functions in terms of solution quality. Important to note that PKT is enough for determining problem-wise ranking of algorithms though because it is comprises both Optimality and Comparability together as it comprises KT scores. Therefore, only PKT is considered for convergence. Problem-wise ranking on F15-F25 in terms of convergence of the algorithms is presented in Table~\ref{tab:pconrank}. As indicated by PKT scores, MFO is the best performing algorithm in terms of convergence in most of the functions. Whereas, performance of WOA in terms of convergence is worst in most of the functions.

\subsection{Overall Ranking}
Overall ranking in terms of solution quality of the algorithms is presented in Table~\ref{tab:osqrank}. Echoing with the problem-wise ranking, SaDE stands as the best performing algorithm in terms of solution quality for F1-F14. SaDE is best scores in all five overall ranking scores. SCA remains the worst performing, while ALO is second best. However, for F15-F25, ALO is ranked as best and  WOA as the worst performing algorithm in terms of solution quality. 

Overall ranking in terms of convergence of the algorithms is presented in Table~\ref{tab:oconrank}. Like solution quality, SaDE is best performing algorithm in terms of convergence as well for F1-F14. However, as already noted in case of problem-wise ranking, SCA is worst and ALO is second best in terms of convergence as well for F1-F14. Though, ALO is ranked best for F15-F25 in problem-wise ranking, in overall ranking actually MFO is ranked as best. This is because, though ALO is better performing algorithm than MFO in most of the functions but MFO performed way better in some functions. Moreover, during one-to-one and one-to-many convergence comparison, already noted that ALO and MFO are strong competitors. Like problem-wise ranking, performance of WOA seems to be worst in terms of convergence in overall ranking as well.   

\subsection{Wilcoxon Rank-sum Test}
The significant difference in solutions obtained with SaDE algorithm compared to other algorithms are analyzed using the Wilcoxon rank-sum statistical test with a $5\%$ accuracy. The Wilcoxon rank-sum test p-values with a $5\%$ significance level are shown in Table~\ref{tab:wrank} and Table~\ref{tab:wrank2}. The '+' and '-' marks in Table~\ref{tab:wrank} indicate a significant +ve or -ve difference between the algorithms.  According to p-values, statistically significant differences can be seen in almost every function. However, no significant differences can be seen in Table~\ref{tab:wrank2} as indicated by p-values and T, except for WOA. Already noted in previous Section, WOA as worst performing algorithm in terms of convergence. The  Wilcoxon test reaffirms SaDE is best performing algorithm in terms of quality and WOA worst in terms of convergence as noted in the direct comparison approach results presented in Tables (1-7). The results of direct comparison approach is aligned with Wilcoxon test and in addition it gives more insights about the solutions produced by the algorithms in comparison to others both in the grounds of solution quality and convergence.



\section{Conclusion}
A direct comparison approach is proposed to evaluate the performance of EOAs both in the grounds of solution quality and convergence, where solutions obtained with the EOAs are compared directly. Introduced a direct comparison matrix called \emph{Prasatul Matrix} to record the direct comparison outcomes of solutions produced by two algorithms. The prasatul matrix has three levels of abstractions both in rows (Win, Tie \& Lose) and columns (Best, Average \& Worst). Designed five different performance measures as well as two ranking schemes based on the prasatul matrix, which can be used to evaluate algorithms both in terms of solution quality as well as convergence.  

Efficacy of the proposed direct comparison approach is analyzed with six different EOAs on 25 benchmark functions. One-to-one as well as one-to-many comparison results are analyzed and demonstrated how solution quality and convergence of EOAs can be analyzed with newly designed performance measures. Presented the results of problem-wise ranking and overall ranking for both solution quality and convergence. The outcomes of proposed approach is also verified with the outcomes of Wilcoxon rank-sum statistical test. Results indicate that the outcomes of proposed approach not only aligned with Wilcoxon rank-sum statistical test outcomes but also gives more insights about the performance of algorithms, which could not be done earlier. Most importantly, with the proposed approach, algorithms can also be ranked on the grounds of convergence.



  
\section*{Acknowledgment}
This work is supported by the Science and Engineering Board (SERB), Department of Science and Technology (DST) of the Government of India under Grant No. EEQ/2019/000657.

\ifCLASSOPTIONcaptionsoff
  \newpage
\fi

\bibliographystyle{IEEEtran}
\bibliography{myref}

\begin{IEEEbiography}[{\includegraphics[width=.9in,height=1.2in,clip,keepaspectratio]{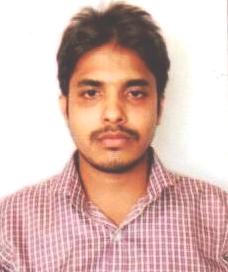}}]{Dr. Anupam Biswas}
  is currently working as an Assistant Professor in the Department of Computer Science and Engineering, National Institute of Technology Silchar, Assam, India. He has received Ph.D. degree in computer science and engineering from Indian Institute of Technology (BHU), Varanasi, India in 2017. He has received M. Tech. Degree in computer science and engineering from Motilal Nehru National Institute of Technology Allahabad, Prayagraj, India in 2013 and B. E.  degree in computer science and engineering from Jorhat Engineering College, Jorhat, Assam in 2011. He has published several research papers in transactions, reputed international journals, conference and book chapters. His research interests include Evolutionary computation, Machine learning, Social networks, Computational music, and Information retrieval. He has five granted patents, out of which four are Germany patents and one South African patent. He is the Principal Investigator of four on-going DST-SERB sponsored research projects in the domain of machine learning and evolutionary computation. He has served as Program Chair of International Conference on Big Data, Machine Learning and Applications (BigDML 2019) and Publicity Chair of BigDML 2021. He has served as General Chair of 25th International Symposium Frontiers of Research in Speech and Music (FRSM 2020) and co-edited the proceedings of FRSM 2020 published as book volume in Springer AISC Series. He has edited five books that are published by various series of Springer. Also edited a book with Advances in Computers book Series of Elsevier.
\end{IEEEbiography}

\vspace{11pt}

\vfill

\end{document}